
\documentclass[letterpaper, 10 pt, conference]{ieeeconf}  

\IEEEoverridecommandlockouts                              

\overrideIEEEmargins                                      


\usepackage[english]{babel}
\usepackage[utf8]{inputenc}
\usepackage[cmex10]{amsmath}
\usepackage{amssymb}

\usepackage{float}
\usepackage{graphicx}
\usepackage{amsfonts}
\usepackage{mathtools}
\usepackage{xcolor}
\usepackage{color}
\usepackage{verbatim}
\usepackage{graphicx}
\usepackage{subcaption}
\usepackage{soul}

\newtheorem{definition}{Definition}{}
{}
\newtheorem{proposition}{Proposition}{}
\newtheorem{theorem}{Theorem}{}
\newtheorem{remark}{Remark}{}
\newtheorem{lemma}{Lemma}{}

\title{Performance Bounds for Neural Network Estimators: Applications in Fault Detection}


\author{Navid Hashemi, Mahyar Fazlyab, Justin Ruths
\thanks{N. Hashemi and J. Ruths are with the Department of Mechanical Engineering, The University of Texas at Dallas. Email: {\tt\small (nxh150030, jruths)@utdallas.edu}. M. Fazlyab is with the John Hopkins Mathematical Institute for Data Sciences. Email: {\tt\small mahyarfazlyab@jhu.edu}}
}

\begin{document}

\maketitle
\thispagestyle{empty}
\pagestyle{empty}

\begin{abstract}
We exploit recent results in quantifying the robustness of neural networks to input variations to construct and tune a model-based anomaly detector, where the data-driven estimator model is provided by an autoregressive neural network. In tuning, we specifically provide upper bounds on the rate of false alarms expected under normal operation. To accomplish this, we provide a theory extension to allow for the propagation of multiple confidence ellipsoids through a neural network. The ellipsoid that bounds the output of the neural network under the input variation informs the sensitivity - and thus the threshold tuning - of the detector. We demonstrate this approach on a linear and nonlinear dynamical system.
\end{abstract}

\section{Introduction}
The rise in interest in data-driven techniques is a response to the need for better models of complex systems. Model-based fault and anomaly detection in dynamical systems typically employs first-principle models (Newton's and Kirchhoff's Laws, reaction kinetics, etc.) of systems to identify discrepancies between the observed and predicted sensor measurements. However, to use this general approach on large-scale operations, such as petro-chemical refineries and power distribution grids, operators need scalable and efficient techniques for creating models. Data-driven tools offer a compelling option because models can be devised simply from past data, do not require intimate knowledge of system parameters, and can be recomputed regularly to update the dynamics that may change over time.  

Artificial neural networks, in particular, provide relatively easy training and generalization, simple architecture, good ability to approximate nonlinear functions, and robust to inexact input data \cite{hassoun1995fundamentals}. 
Neural networks can be used to identify and control nonlinear dynamic systems because they can approximate a wide range of nonlinear functions. 
Over the past two decades, fault detection has employed neural networks in a variety of ways, including using them as classifiers to categorize normal or various faulty behaviors \cite{liu1999extended} and using them as a model of the system to provide an estimate for subsequent fault detection \cite{vemuri1997neural,wlas2005artificial}. In the latter case, neural network estimators can be implemented using an iterative approach (in which the past estimate(s) is part of the input to produce the next estimate) or an autoregressive approach (in which a finite history of past measurements is used to produce the next estimate) \cite{frank1997new}\nocite{li2005fault}-\cite{abbaspour2017neural}. The detection problem becomes particularly challenging when the fault is unknown or cannot be modeled (e.g., we want to build a detector that is sensitive to potentially unknown faults or anomalies, like attacks). In this context, tuning of the detector is important to balance the sensitivity of the detector with the prevalence of false alarms and while data-driven techniques are not new to fault detection, tuning to-date has been done empirically. In this paper we leverage recent results to quantify the robustness of neural networks \cite{fazlyab2019probabilistic,fazlyab2019safety} to develop a novel data-driven anomaly detector with guarantees on the upper limit of the false alarm rate.


Quantifying the robustness of neural networks is originally motivated by their vulnerability to adversarial attacks, i.e., carefully chosen small input perturbations that can drastically change their output. To this end, a plethora of tools have been developed to bound the output of neural networks for a given range of inputs (e.g., the set of plausible attacks) ~\cite{fazlyab2019safety}\nocite{dutta2017output, lomuscio2017approach, wong2018provable, huang2017safety, raghunathan2018semidefinite,katz2017reluplex}-\cite{hashemi2020certifying}. Our particular interest in this paper is the uncertainty propagation technique put forth in \cite{fazlyab2019probabilistic}, in which the authors develop a semi-definite program that propagates an input ellipsoid (e.g., confidence region of a density function) through the neural network to obtain \emph{guaranteed} ellipsoidal over-approximation of the output set. In another context, Monte Carlo sampling, the Unscented Transform (UT) and Extended Kalman Filtering (EKF) have been used to take a set of samples from the input distribution, propagate them through the neural network, and approximate the first and second moments of the output distribution from them \cite{abdelaziz2015uncertainty,titensky2018uncertainty}. These sampling methods scale to larger neural networks but they lack formal guarantees.

\section{Background}
Consider a discrete-time dynamical system whose state update can be described by an \textit{unknown} continuous function, $\mathcal{F}:\mathbb{R}^n\to\mathbb{R}^n$,
\begin{equation}
    x_{k+1} = \mathcal{F}(x_k),
\end{equation}
which may arise as a freely evolving dynamical system or as a feedback control system. Our observations (sensor measurements) of the system are linear combinations of the states corrupted by additive zero-mean noise $v_k\in\mathbb{R}^p$, with known covariance $\Sigma_v$,
\begin{equation} \label{eq:observation}
    y_k=Hx_k+v_k,
\end{equation}
where the sensor matrix $H\in\mathbb{R}^{p\times n}$ is known. Because the system model is unknown, or possibly too complicated with too many parameters to use in an effective way, we use a data-driven estimator with the assumption that it is possible to construct an accurate nonlinear auto-regressive (NARX) model of the output observations based on past measurement values (i.e., the system is fully observable),
\begin{equation}\label{eq:ffNN}
    y_{k+1}=f(\mathcal{Y}_{k,N}),
\end{equation}
where $\mathcal{Y}_{k,N}=\textbf{vec}\left[\left\{ y_{k-i} \right\}_{i=0}^N \right]\in\mathbb{R}^{p(N+1)}$. 

Again, the mapping $f: \mathbb{R}^{n\times p}\to \mathbb{R}^p$ is \textit{unknown} or otherwise too complicated to form from first-principle models, so we approximate this mapping through supervised training of a feedforward neural network from time-series recordings of sensor measurements. At each time $k\geq N$ the network is provided  the vector of current and $N$ past measurements $\mathcal{Y}_{k,N}$ as input training data and the next measurement $y_{k+1}$ as the labeled output data of the neural network. The network is trained under a large number of varying initial conditions.

The $\ell$-layer neural network, trained with input data $\mathcal{Y}_{k,N}$ and labeled data $y_{k+1}$, is
\begin{equation}\label{observer}
    \begin{aligned}
    \zeta_k&=\left[\ \mathcal{Y}_{k,N}\ \right]^\top  ,\\
    z^0&=\zeta_k,\\
    z^{t+1}&=\phi(W^tz^t+b^t),\quad t=0,1, \dots, \ell-1,\\
    \hat{y}_{k+1}&=W^{\ell}z^\ell +b^\ell,
    \end{aligned}
\end{equation}
with ReLU activation function $\psi(s_i)=\max{(0,s_i)}$, $\forall s_i \in \mathbb{R}$, and $\phi([s_1,s_2,..,s_d])=[\psi(s_1),\psi(s_2), ..., \psi(s_d)]$. We train the model to compute optimal weight matrices and bias vectors and we call the output of the neural network, $\hat{y}_{k+1}$, as the estimated/predicted output.

The primary role of the training process is to refine the neural network to produce \textit{accurate} estimates, i.e., predictions $\hat{y}_k$ that are close to the actual measurements $y_k$. The purpose of this paper is not to refine the training process to improve accuracy, but to quantify the robustness of the prediction to input uncertainties for a \textit{given} trained neural network (e.g., uncertainty caused by sensor noise). Further, we aim to use the robustness bounds as a way to quantify normal behavior from abnormal behavior.  

\begin{remark}
The NARX architecture is one of several ways to construct a neural network estimator of a dynamical system. An alternative choice is to feed the past estimation $\hat{y}_k$ as an input to the prediction for $\hat{y}_{k+1}$. The autoregressive approach we use here simplifies the training process and will also simplify the robustness quantification since the bounds on the neural network inputs are constant across time.
\end{remark}


\begin{figure*}[t]
    \centering
    \includegraphics[width=0.75\linewidth]{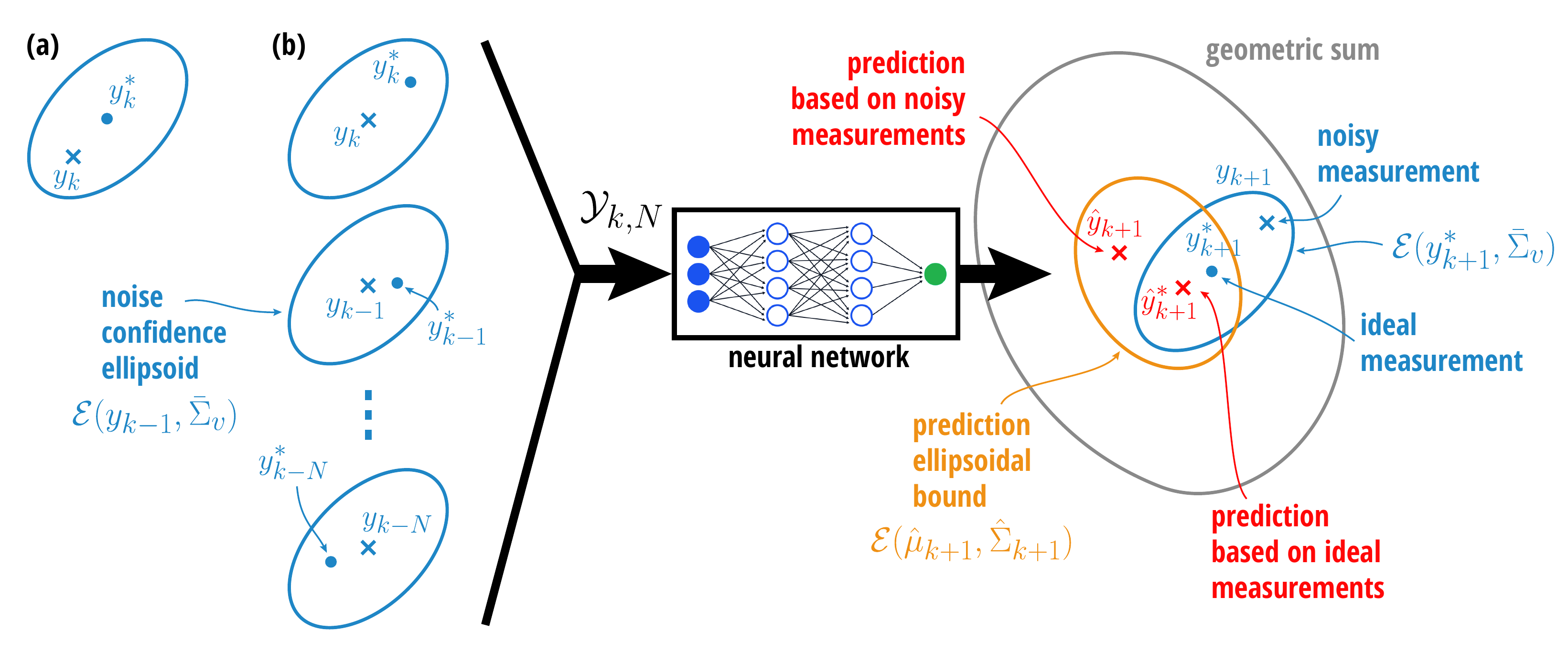}
    \caption{The proposed detector relies on the development of a prediction ellipsoid (orange) $\mathcal{E}(\hat{\mu}_{k+1},\hat{\Sigma}_{k+1})$ that bounds the variation of the estimate due to the noise inherent in the input measurements $y_k$, $y_{k-1}$, \dots, $y_{k-N}$ and captured by the input ellipsoids $\mathcal{E}(y_{k-i},\bar{\Sigma}_v)$, $i=0,\dots,N$. The framework to compute the prediction ellipsoid is presented in Section \ref{sec:robustness} and the detector that uses this ellipsoid to decide between normal and anomalous behavior is presented in Section \ref{sec:detector}.} 
    \label{fig:idea}
\end{figure*}

\section{Neural Network-Based Anomaly Detector} \label{sec:detector}
In the same spirit as propagating the estimation covariance of a Luenberger observer (e.g., Kalman Filter), evaluating the robustness of a neural network estimator identifies the quality of the estimation - quantifying how much the estimator can reduce the influence of the noise on the prediction. At the same time, quantifying the inherent variation in the predictions due to uncertainty (noise) also provides a bound on the variation observed during normal behavior. 

Consider the noisy measurement $y_k$ from \eqref{eq:observation} at a particular time step $k$, which is composed of the non-noisy (deterministic) ``ideal'' measurement,
\begin{equation}
    y^*_k=H x_k,
\end{equation}
and a sample of the sensor noise, $v_k$. If a different realization of the sensor noise $v_k$ was instead used, our goal is to bound how different the prediction $\hat{y}_{k+1}$ would be under these two sensor realization scenarios. Said differently, we would like to assess the robustness of the prediction to the inherent perturbations caused by sensor noise.

Because the support of the noise could be large, or possibly infinite (e.g., Gaussian noise), it is practically useful to truncate the noise distribution at a desired confidence set, such that the probability $\bar{p}$ that a noise sample is drawn from within the confidence set is a desired value (typically close to one). An ellipsoidal confidence set can be constructed using a scaled version of the covariance of the noise distribution as the shape matrix and characterized by
\begin{equation}
    \Pr[v_k^\top \Sigma_v^{-1}v_k \leq \alpha] = \Pr[v_k^\top (\underbrace{\alpha\Sigma_v}_{\bar{\Sigma}_v})^{-1}v_k \leq 1] =\bar{p}.
\end{equation}
This defines the $\bar{p}$-confidence ellipsoid on the sensor noise, $\mathcal{E}(0,\bar{\Sigma}_v)$, with an ellipsoid with center $\mu$ and shape matrix $\Sigma$ defined as
\begin{equation}
    \mathcal{E}(\mu,\Sigma) := \left\{ \xi\ \big|\ (\xi-\mu)^\top \Sigma^{-1}(\xi-\mu) \leq 1 \right\},
\end{equation}
and the size of the ellipsoid to match the desired confidence level $\bar{p}$ is chosen by selecting $\alpha=2\Gamma^{-1}(\frac{p}{2},\bar{p})$, using the inverse regularized lower incomplete gamma function and $p$ is the number of measurements at each time step \cite{murguia2019IET}.  

One way to interpret this bound is that since $y_k=y^*_k + v_k$, the actual measurement $y_k$ is contained within an ellipsoid $\mathcal{E}(y_k^*,\bar{\Sigma}_v)$, i.e., centered at the ideal measurement $y^*_k$ (see Fig. \ref{fig:idea}a). However, we can also see that $y^*_k = y_k-v_k$ which says the opposite - that the ideal measurement is contained within the ellipsoid centered at the actual measurement, $y^*_k\in\mathcal{E}(y_k,\bar{\Sigma}_v)$ (see Fig. \ref{fig:idea}b). The latter interpretation is practically useful because we have access to the actual measurements, but not the ideal (noise-free) measurements. 

The primary contribution of this paper is to construct and tune a model-based anomaly detector that employs a neural network to produce an estimate. Using Fig. \ref{fig:idea} as a guide, this objective can be decomposed into two pieces
\begin{enumerate}
    \item the framework to quantify the robustness of the neural network under input perturbations ($N$ blue input confidence ellipsoids), $\mathcal{E}(y_{k-N},\bar{\Sigma}_v), \cdots, \mathcal{E}(y_{k},\bar{\Sigma}_v)$ to produce an ellipsoidal bound on the predictions (orange ellipsoid) which we will denote by $\mathcal{E}(\hat{\mu}_{k+1},\hat{\Sigma}_{k+1})$; and 
    \item the definition of a detector that uses this (orange) ellipsoid to bound normal behavior through the use of the geometric sum.
\end{enumerate}
We address the detector definition first in the following subsection before presenting the mathematical framework to create the prediction bound in Section \ref{sec:robustness}.


\subsection{Detector Definition}
Given the ellipsoidal confidence sets that quantify the perturbation of the inputs to the neural network estimator, in Section \ref{sec:robustness} we will compute an ellipsoidal bound on the output of the estimate, which we call the prediction bound $\mathcal{E}(\hat{\mu}_{k+1},\hat{\Sigma}_{k+1})$. This bound guarantees that any $N+1$ past measurement vectors which stay within their respective confidence sets are mapped by the neural network to a point within the prediction bound ellipsoid. 

From Fig. \ref{fig:idea}, since the ideal past measurements $y_{k-N}^*$, \dots, $y_k^*$ are within the input confidence ellipsoids, we know the neural network output estimate $\hat{y}_{k+1}^*$ will be located within the prediction bound. The accuracy of the estimator can be interpreted as the distance between this prediction based on ideal measurements $\hat{y}_{k+1}^*$ and the next ideal measurement $y_{k+1}^*$. In our framework presented here, we will assume that the trained model is accurate such that this distance is relatively small and can be neglected; however, an ellipsoidal bound on the estimation accuracy could be easily integrated into our detector definition. 

As discussed above, the actual noisy measurement at $k+1$ is, with probability $\bar{p}$ within the confidence ellipsoid centered at the ideal measurement, $y_{k+1}\in\mathcal{E}(y^*_{k+1},\bar{\Sigma}_v)$. Although we do not know the ideal measurement, we know it is within the prediction bound. This leads us to suggest a detector that evaluates normal behavior as measurements based on the geometric sum between the prediction bound and the confidence ellipsoid.

\begin{definition} \label{def:detector}
A model-based anomaly detector that employs an autoregressive neural network to produce an estimate from past measurements raises alarms using the following logic:
\begin{equation}
\left\{\ 
\begin{aligned}
    y_{k+1} \in \mathcal{E}(\hat{\mu}_{k+1},\hat{\Sigma}_{k+1}) \oplus \mathcal{E}(0,\bar{\Sigma}_v) \ &\to\ \  \text{no alarm}, \\
    \text{otherwise}\hspace{3.7cm} &\to\ \ \text{alarm},
\end{aligned}
\right.
\end{equation}
where the prediction bound is characterized by a center $\hat{\mu}_{k+1}$ and shape matrix $\hat{\Sigma}_{k+1}$; the noise $\bar{p}$-confidence ellipsoid is characterized by the shape matrix $\bar{\Sigma}_v$; and $\oplus$ denotes the geometric (Minkowski) sum of two sets $\mathcal{S}_1\oplus\mathcal{S}_2 = \{ s_1+s_2\ |\ s_1\in\mathcal{S}_1,\ s_2\in\mathcal{S}_2\}$.
\end{definition}

\begin{remark}
Since the geometric sum of two ellipsoids is, in general, not an ellipsoid, it is best to verify this inclusion directly rather than to first approximate the sum as an ellipsoid. There are a number of computational techniques to verify the inclusion of a point in the geometric sum of ellipsoidal sets such as linear matrix inequalities \cite{boyd2004convex} or geometric methods to compute the exact geometric sum \cite{hashemi2020co}.  
\end{remark}

The purpose of a detector is to alert the operator of a system to potential anomalies, such as faults or attacks. The number of \textit{false alarms}, i.e., alarms raised during normal operation, is a key way to assess the  performance of a detector. Tuning the detector to balance sensitivity and false alarms is a key step towards making a detector usable in practice.

\begin{proposition}
The detector defined in Definition \ref{def:detector} has a false alarm rate that is upper bounded by $1-\bar{p}^{N+2}$.
\end{proposition}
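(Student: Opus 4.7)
The plan is to count the noise events that must all fall inside their respective confidence sets in order for the detector to be guaranteed not to raise an alarm, bound the joint probability of these events using independence of the sensor noise across time, and then take the complement to get the false alarm rate.

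First I would identify the relevant noise samples. The detector at time $k+1$ uses the $N+1$ past measurements $y_{k-N},\dots,y_k$ as inputs and compares the next measurement $y_{k+1}$ against the Minkowski sum. By construction of $\bar{\Sigma}_v=\alpha\Sigma_v$, each sample $v_j$ lies in $\mathcal{E}(0,\bar{\Sigma}_v)$ with probability exactly $\bar{p}$, and equivalently $y_j^*\in\mathcal{E}(y_j,\bar{\Sigma}_v)$ with probability $\bar{p}$. There are $N+2$ such independent events in play: the $N+1$ input noise samples $v_{k-N},\dots,v_k$ and the output-time noise $v_{k+1}$.

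Next I would chain together the inclusions that hold whenever all $N+2$ of these events occur. On the input side, each $y_{k-i}^*\in\mathcal{E}(y_{k-i},\bar{\Sigma}_v)$ means the ideal past measurement vector lies inside the $N+1$ confidence ellipsoids used to construct the prediction bound; by the robustness guarantee of the propagation method (to be developed in Section \ref{sec:robustness}), this forces the ideal prediction $\hat{y}_{k+1}^{*}=f(\mathcal{Y}_{k,N}^{*})$ into $\mathcal{E}(\hat{\mu}_{k+1},\hat{\Sigma}_{k+1})$. Invoking the stated accuracy assumption, $y_{k+1}^{*}=\hat{y}_{k+1}^{*}$, and since $v_{k+1}\in\mathcal{E}(0,\bar{\Sigma}_v)$, we get
\begin{equation*}
y_{k+1}=y_{k+1}^{*}+v_{k+1}\in\mathcal{E}(\hat{\mu}_{k+1},\hat{\Sigma}_{k+1})\oplus\mathcal{E}(0,\bar{\Sigma}_v),
\end{equation*}
so by Definition \ref{def:detector} no alarm is raised. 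Contrapositively, a false alarm requires at least one of the $N+2$ noise events to fail.

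Finally I would compute the probability. Using the whiteness (temporal independence) of the sensor noise, the joint probability that all $N+2$ events hold is $\bar{p}^{N+2}$, hence the probability that at least one fails is at most $1-\bar{p}^{N+2}$. Since this strictly overbounds the false alarm event, the false alarm rate is upper bounded by $1-\bar{p}^{N+2}$. The main subtle step is the middle one: cleanly invoking the yet-to-be-proven robustness result that propagates $N+1$ simultaneous input ellipsoids through the network to a single output ellipsoid, together with the benign accuracy assumption that lets us identify $\hat{y}_{k+1}^{*}$ with $y_{k+1}^{*}$; the probabilistic bookkeeping is then immediate.
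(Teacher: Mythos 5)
Your proof is correct and follows essentially the same route as the paper's: count the $N+2$ independent noise events (the $N+1$ input samples plus $v_{k+1}$), observe that their joint occurrence forces $y_{k+1}$ into the Minkowski sum and hence no alarm, and take the complement of $\bar{p}^{N+2}$. You are in fact more explicit than the paper about the chain of inclusions and about invoking the accuracy assumption, but the underlying argument is identical.
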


\begin{proof}
The fact that the sensor noise is independent allows us to quantify probabilities easily. The confidence ellipsoids are used $N+1$ times for times $k-N$, \dots, $k-1$, $k$ (the input ellipsoids) and once for time $k+1$ in the geometric sum. Thus since $v_k$ is independent, the probability that all these $v_{k-N}$, \dots, $v_{k+1}$ fall within their confidence sets is $\bar{p}^{N+2}$. Because the prediction ellipsoid is an outer bound and because there could be noise realizations that lie outside their confidence ellipsoids but still remain inside the prediction bound, this probability is a lower bound. Thus under normal operation, we expect the probability of generating a false alarm to be at most $1-\bar{p}^{N+2}$.
\end{proof}

\section{Robustness of Prediction} \label{sec:robustness}

\begin{figure*}[t]
\small
\begin{align}
E_i=\left[\begin{matrix}  
    \begin{array}{c|c|c}  
     0_{n_i \times \left(\sum_{j=1}^{i-1}n_j\right)} \ \  I_{n_i} \ \   0_{n_i \times \left(\sum_{j=i+1}^{q}n_j\right) } &  0_{n_i \times \left(\sum_{t=2}^{\ell+1}N_j\right)}  & 0_{n_i \times 1}\\
    \hline
     0_{1 \times n_{\Gamma}} &  0_{1 \times \left(\sum_{t=2}^{\ell+1}N_j\right)}  & 1 \end{array}  
     \end{matrix}\right]
 \label{eq:selectionmatrix}\tag{$\star$} 
\end{align}
\normalsize
\hrule
\end{figure*}

Our detector design leverages the ability to compute a bound on the predictions made by the neural network estimator under perturbations to the input. To accomplish this bound, we use the formulation described in \cite{fazlyab2019probabilistic}, which provides an ellipsoidal bound on the output of a neural network given a single ellipsoidally bounded input vector. Here, as seen in Fig. \ref{fig:idea}, our approach uses multiple ellipsoidally bounded input vectors and, therefore, requires an extension to the method in \cite{fazlyab2019probabilistic}. The fundamental difference in these two approaches in the context of our problem is at what stage the confidence level truncation is applied. In our proposed multi-ellipsoid input approach we construct a confidence ellipsoid for each individual measurement. To use the single ellipsoid input approach of \cite{fazlyab2019probabilistic} directly, we would define an overall confidence ellipsoidal set on the entire stacked input vector. The multiple ellipsoid approach has a few key benefits: (1) while the noise is independent, the center of the input ellipsoids $y_{k-N}$, \dots, $y_k$ are correlated due to the deterministic dynamics of the system, which complicates concatenating the measurements; (2) the input ellipsoids all have the same shape matrix whereas the single input ellipsoid shape matrix would be time dependent; and (3) 
in Appendix \ref{appendixA} we show that the multiple ellipsoid input approach results in a less conservative ellipsoidal prediction bound. 

Here we follow the general approach in \cite{fazlyab2019probabilistic,fazlyab2019safety}, introducing the updates needed for multiple ellipsoidally bounded inputs. Consider the following general neural network with $\ell$ hidden layers,
    \begin{equation}\label{eq:NN}
    \begin{aligned}
    \Gamma_k&=[\gamma_{1,k}^\top ,\  \ \gamma_{2,k}^\top ,\ ..., {\gamma_{q,k}}^\top ]^\top \\
    z^0&=\Gamma_k\\
    z^{t+1}&=\phi(W^t z^t+b^t),\quad t=0,1,...,\ell-1\\
    \pi(z^0)&=W^{\ell}z^\ell +b^\ell,
    \end{aligned}
    \end{equation}
with $\gamma_{i,k} \in \mathbb{R}^{n_i}$, $\Gamma_k \in \mathbb{R}^{n_{\Gamma}}$ ($\sum_{j=1}^q n_i=n_\Gamma$), $\pi(z^0) \in \mathbb{R}^{n_{\pi}}$ and $z^t \in \mathbb{R}^{N_t}$. 
By concatenating all the post-activation values as $\mathbf{z}=\left[\ {z^{0}}^\top,\ {z^{1}}^\top, ..., {z^{\ell}}^\top \right]^\top  \in \mathbb{R}^{N_\mathbf{z}}$ ($N_{\mathbf{z}} = \sum_{t=0}^\ell N_t$), and defining the ``selector matrices'' $S^t$ such that $z^t = S^t\mathbf{z}$, we can rewrite the network as
    \begin{equation}\label{eq:NN2}
    \begin{aligned}
    \Gamma_k&=[\gamma_{1,k}^\top ,\  \ \gamma_{2,k}^\top ,\ ..., {\gamma_{q,k}}^\top ]^\top \\
    z^0 &= S^0\mathbf{z} =\Gamma_k\\
    B\mathbf{z}&= \phi(A \mathbf{z}+b)\\
    \pi(z^0)&=W^{\ell}S^{\ell}\mathbf{z} +b^{\ell},
    \end{aligned}
    \end{equation}
    where
    \begin{align}
    &A=\begin{bmatrix}W^0 & 0 & \dots &0 & 0 \\ 0 & W^1 & \dots & 0 & 0\\ \vdots & \vdots & \ddots & \vdots & \vdots \\ 0 & 0 & \dots & W^{\ell-1} & 0\end{bmatrix}, \quad b=\begin{bmatrix}b^0\\b^1\\ \vdots \\ b^{\ell-1} \end{bmatrix},\nonumber\\
    & B=\begin{bmatrix}0 & I_{n_1} & \dots & 0 \\ \vdots & \vdots & \ddots & \vdots  \\ 0& 0 & \dots & I_{n_{\ell}}\end{bmatrix}.
    \end{align}
Suppose each input $\gamma_{i,k}$ takes values inside the ellipsoid $\mathcal{E}(\mu_{\gamma_{i,k}},\Sigma_{i,k})$. Our goal is to bound the resulting output $\pi(z^0)$ of the neural network by an ellipsoid. To this end, we first abstract the neural network via Quadratic Constraints (QC) \cite{fazlyab2019safety} and then use the S-procedure to propagate the $q$ input ellipsoids through the network. We begin with the following definition.
%

\begin{definition}[QC for functions \cite{9301422}]\label{QQC}
Let $\phi: \mathbb{R}^d \to \mathbb{R}^d$ and suppose $\mathcal{Q}_{\phi} \subset \mathbb{S}^{2d+1}$  is the set of all symmetric indefinite matrices $Q$ such that
     \begin{equation}
        \begin{bmatrix}x\\\phi(x)\\1\end{bmatrix}^\top  Q\begin{bmatrix}x\\\phi(x)\\1\end{bmatrix}\geq 0, \quad \forall \ x \in \mathcal{X},
    \end{equation} 
where $\mathcal{X} \subset \mathbb{R}^d$ is a nonempty set. Then we say $\phi$ satisfies the QC defined by $\mathcal{Q}_\phi$ on $\mathcal{X}$.
\end{definition}
%
\begin{lemma}\label{lem:INOUTNN}
Consider the neural network described in \eqref{eq:NN2}. 
\begin{enumerate}[leftmargin=*]
\item Suppose $\gamma_{i,k}\in \mathcal{E}(\mu_{\gamma_{i,k}}, \Sigma_{i,k})$. Then for any $\tau_i \geq 0$ we have
\begin{equation}\label{eq:InputQC}
    \begin{bmatrix} \mathbf{z} \\1 \end{bmatrix}^\top  \left( \sum_{i=1}^{q} \tau_i M_i \right) \begin{bmatrix} \mathbf{z} \\1 \end{bmatrix}\geq 0,
\end{equation}
where
\begin{equation*} 
    M_{i}\!=\!E_i^\top  \begin{bmatrix}-\Sigma_{\gamma_{i,k}}^{-1} & \Sigma_{\gamma_{i,k}}^{-1}\mu_{\gamma_{i,k}}\\ \mu_{\gamma_{i,k}}^\top \Sigma_{\gamma_{i,k}}^{-1} & -\mu_{\gamma_{i,k}}^\top  \Sigma_{\gamma_{i,k}}^{-1}\mu_{\gamma_{i,k}}+1  \end{bmatrix} E_i,
\end{equation*}
and $E_i$ is defined as in \eqref{eq:selectionmatrix}.
\item Let $U\in \mathbb{S}^{n_{\pi}}$, $V \in \mathbb{R}^{n_{\pi}}$. Suppose $\mathbf{z}$ satisfies the the quadratic inequality
\begin{equation}\label{eq:OutputQC}
    \begin{bmatrix}\mathbf{z} \\ 1 \end{bmatrix}^\top M_{out}\begin{bmatrix}\mathbf{z} \\ 1 \end{bmatrix} \leq 0,
\end{equation}
where
\begin{equation*}
    M_{out} \!=\! \begin{bmatrix}W^{\ell}S^{\ell} & b^\ell\\ 0 & 1\end{bmatrix}^\top  \begin{bmatrix}U^2 & UV \\ V^\top  U &V^\top  V\!-\!1\end{bmatrix} \begin{bmatrix}W^{\ell}S^{\ell} & b^\ell\\ 0 & 1\end{bmatrix}.
\end{equation*}
Then we have $\pi(z^0) \in \mathcal{E}(-U^{-1}V, U^{-2})$.
\end{enumerate}
\end{lemma}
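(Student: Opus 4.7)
The plan is to reduce both claims to elementary ellipsoid-membership identities by rewriting each ellipsoid condition as a quadratic form on the augmented vector $[\mathbf{z}^\top,\,1]^\top$. Neither part requires new machinery; the work is algebraic, and the main care is in the bookkeeping of the selector matrices.

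For part (1), I would start from the defining inequality $(\gamma_{i,k}-\mu_{\gamma_{i,k}})^\top \Sigma_{\gamma_{i,k}}^{-1}(\gamma_{i,k}-\mu_{\gamma_{i,k}})\leq 1$ of the ellipsoid $\mathcal{E}(\mu_{\gamma_{i,k}},\Sigma_{\gamma_{i,k}})$ and expand it, collecting the quadratic, linear, and constant terms, into the $2\times 2$-block form
\[
\begin{bmatrix}\gamma_{i,k}\\1\end{bmatrix}^\top \begin{bmatrix}-\Sigma_{\gamma_{i,k}}^{-1} & \Sigma_{\gamma_{i,k}}^{-1}\mu_{\gamma_{i,k}}\\ \mu_{\gamma_{i,k}}^\top\Sigma_{\gamma_{i,k}}^{-1} & -\mu_{\gamma_{i,k}}^\top\Sigma_{\gamma_{i,k}}^{-1}\mu_{\gamma_{i,k}}+1\end{bmatrix}\begin{bmatrix}\gamma_{i,k}\\1\end{bmatrix}\geq 0.
\]
The selector matrix $E_i$ in \eqref{eq:selectionmatrix} is built precisely so that $E_i\,[\mathbf{z}^\top,1]^\top=[\gamma_{i,k}^\top,1]^\top$: its top block extracts the $n_i$ coordinates of $z^0=\Gamma_k$ corresponding to $\gamma_{i,k}$ (with zeros on all other coordinates of $\mathbf{z}$), while its bottom row passes the constant $1$ through. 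Applying the congruence $E_i$ to the inequality above yields $[\mathbf{z}^\top,1]\,M_i\,[\mathbf{z}^\top,1]^\top\geq 0$ for each $i$. Since the $\tau_i$ are non-negative, summing preserves the direction of the inequality, which is exactly \eqref{eq:InputQC}.

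For part (2), the key observation is that the output equation in \eqref{eq:NN2} gives
\[
\begin{bmatrix}\pi(z^0)\\1\end{bmatrix}=\begin{bmatrix}W^\ell S^\ell & b^\ell\\ 0 & 1\end{bmatrix}\begin{bmatrix}\mathbf{z}\\1\end{bmatrix}.
\]
Substituting this identity into \eqref{eq:OutputQC} and carrying out the matrix multiplication reduces the left-hand side to
\[
\begin{bmatrix}\pi(z^0)\\1\end{bmatrix}^\top \begin{bmatrix}U^2 & UV\\ V^\top U & V^\top V-1\end{bmatrix}\begin{bmatrix}\pi(z^0)\\1\end{bmatrix}=\|U\pi(z^0)+V\|_2^2-1,
\]
where the identification with a squared Euclidean norm is a direct expansion (the cross terms $\pi(z^0)^\top U V$ and $V^\top U\,\pi(z^0)$ combine into $2V^\top U\,\pi(z^0)$). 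The assumed inequality $\leq 0$ therefore reads $\|U\pi(z^0)+V\|_2^2\leq 1$, equivalently $(\pi(z^0)+U^{-1}V)^\top U^{2}(\pi(z^0)+U^{-1}V)\leq 1$, which is precisely $\pi(z^0)\in\mathcal{E}(-U^{-1}V,\,U^{-2})$.

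The main obstacle is notational rather than conceptual: in part (1) one must verify that the column partitioning of $E_i$ in \eqref{eq:selectionmatrix} matches the concatenation order of $\mathbf{z}$, so that the identity block $I_{n_i}$ lands exactly on the $\gamma_{i,k}$-coordinates of $z^0$; and in part (2) the factorization implicitly uses that $U$ is invertible for $-U^{-1}V$ and $U^{-2}$ to be well-defined (otherwise the conclusion must be stated as the sublevel set $\{\pi:\|U\pi+V\|_2\leq 1\}$).
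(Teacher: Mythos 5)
Your proposal is correct and follows essentially the same route as the paper's own proof: part (1) via the congruence with the selector matrix $E_i$ and nonnegative combination, and part (2) via the affine identity $[\pi(z^0)^\top,1]^\top=\bigl[\begin{smallmatrix}W^\ell S^\ell & b^\ell\\ 0&1\end{smallmatrix}\bigr][\mathbf{z}^\top,1]^\top$. Your explicit expansion of the output quadratic form as $\|U\pi(z^0)+V\|_2^2-1$ and the remark on invertibility of $U$ are minor elaborations the paper leaves implicit, but the argument is the same.
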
 
\vspace{1em}
\begin{proof}
(1) The proof is a slight modification of \cite{fazlyab2019probabilistic}. We know that each input $\gamma_{i,k}$ is bounded by the ellipsoid $\mathcal{E}(\mu_{\gamma_{i,k}}, \Sigma_{\gamma_{i,k}})$ , which means \begin{equation} \label{eq:input_ellipsoidal_bound}
    (\gamma_{i,k}-\mu_{\gamma_{i,k}})^\top \Sigma_{\gamma_{i,k}}^{-1}(\gamma_{i,k}-\mu_{\gamma_{i,k}})<1.
    \end{equation}
This can be rewritten as
    \begin{equation*}
    \begin{bmatrix} \gamma_{i,k} \\1 \end{bmatrix}^\top  \begin{bmatrix}-\Sigma_{\gamma_{i,k}}^{-1} & \Sigma_{\gamma_{i,k}}^{-1}\mu_{\gamma_{i,k}}\\ \mu_{\gamma_{i,k}}^\top \Sigma_{\gamma_{i,k}}^{-1} & -\mu_{\gamma_{i,k}}^\top  \Sigma_{\gamma_{i,k}}^{-1}\mu_{\gamma_{i,k}}+1  \end{bmatrix}\begin{bmatrix} \gamma_{i,k} \\1 \end{bmatrix}\geq 0.
    \end{equation*}
Using the selector matrix $E_i$ defined in \eqref{eq:selectionmatrix}, we can rewrite the preceding inequality as
\begin{equation*}
\begin{bmatrix} \mathbf{z} \\1 \end{bmatrix}^\top  \underbrace{E_i^\top  \begin{bmatrix}-\Sigma_{\gamma_{i,k}}^{-1} & \Sigma_{\gamma_{i,k}}^{-1}\mu_{\gamma_{i,k}}\\ \mu_{\gamma_{i,k}}^\top \Sigma_{\gamma_{i,k}}^{-1} & -\mu_{\gamma_{i,k}}^\top  \Sigma_{\gamma_{i,k}}^{-1}\mu_{\gamma_{i,k}}+1  \end{bmatrix} E_i }_{M_i}\begin{bmatrix} \mathbf{z} \\1 \end{bmatrix}\geq 0,
\end{equation*}
This implies for any $\tau_i \geq 0$, the inequality \eqref{eq:InputQC} holds. (2) For the proof of the second part, we rewrite $\pi(z^0) \in \mathcal{E}(-U^{-1}V,\ U^{-2})$ as
\begin{equation}
    \begin{bmatrix}\pi(z^0) \\ 1 \end{bmatrix}^\top \begin{bmatrix}U^2 & UV \\ V^\top  U &V^\top  V -1\end{bmatrix}\begin{bmatrix}\pi(z^0) \\ 1 \end{bmatrix} \leq 0.
\end{equation}
This can be mapped to the space of vector $\mathbf{z}$ as,
\addtolength{\arraycolsep}{-3pt}
\begin{equation*}
    \begin{bmatrix}\mathbf{z} \\ 1 \end{bmatrix}^\top \begin{bmatrix}W^{\ell}S^{\ell} & b^\ell\\ 0 & 1\end{bmatrix}^\top  \begin{bmatrix}U^2 & UV \\ V^\top  U &V^\top  V \!-\!1\end{bmatrix} \begin{bmatrix}W^{\ell}S^{\ell} & b^\ell\\ 0 & 1\end{bmatrix}\begin{bmatrix}\mathbf{z} \\ 1 \end{bmatrix} \leq 0,
\end{equation*}
\addtolength{\arraycolsep}{3pt}
which characterizes $M_{out}$ provided in the lemma.
\end{proof}

The quadratic constraint for the activation layers, denoted by $\mathcal{Q}$ is presented in \cite{fazlyab2019safety} (Lemma 4) and remains unchanged in our formulation. Now consider the implicit equation $B \mathbf{z} = \phi(A \mathbf{z} + b)$ in \eqref{eq:NN2} describing the neural network, where $\phi$ satisfies the quadratic constraint defined by $\mathcal{Q}$. By Definition \ref{QQC} this implies that for any $Q \in \mathcal{Q}$, 
\begin{equation}\label{eq:activationQC}
    \begin{bmatrix}
    \mathbf{z} \\ 1
    \end{bmatrix}^\top \underbrace{\begin{bmatrix}A & b\\ B & 0\\ 0 & 1 \end{bmatrix}^\top  Q  \begin{bmatrix}A & b\\ B & 0\\ 0 & 1 \end{bmatrix}}_{M_{mid}(Q)} \begin{bmatrix}
    \mathbf{z} \\ 1
    \end{bmatrix} \geq 0.
\end{equation}
Returning to our main goal, bounding the output $\pi(z^0)$ by an ellipsoid given ellipsoidal bounds on the inputs $\gamma_{i,k}$, we know that the quadratic inequalities \eqref{eq:InputQC} and \eqref{eq:activationQC} hold and we would like to conclude the quadratic inequality \eqref{eq:OutputQC}. To this end we use the S-procedure.
%
%
%
Therefore, the multiple ellipsoidally bounded input version of Theorem 1 (Output covering ellipsoid) of \cite{fazlyab2019probabilistic} can be presented as follows.
%
\begin{theorem} \label{thm:multiellipsoid}
Consider the multi-layer neural network described by \eqref{eq:NN}. Suppose $\gamma_i \in \mathcal{E}(\mu_{\gamma_{i,k}}, \Sigma_{\gamma_{i,k}})$ and ReLU activation function $\phi$ satisfies the quadratic constraints defined by \eqref{eq:activationQC}. Let $U\in \mathbb{S}^{n_{\pi}}$, $V \in \mathbb{R}^{n_{\pi}}$ be two matrices that satisfy $M(\tau,Q,U,V) \preceq 0$ for some $Q \in \mathcal{Q}$ and $\tau \in \mathbb{R}_{+}^{q}$, where
\addtolength{\arraycolsep}{-3pt}
 \begin{equation}\label{eq:schurconstraint}
    M \!= \! \left[\begin{matrix}  
    \begin{array}{c|c}  
  \sum_{i=1}^{q} \tau_i M_i \! + \! M_{mid}(Q) \! - \! ee^\top & \begin{matrix}  
    0_{(N_{\mathbf{z}}-N_\ell) \times n_{\pi}} \\\\ W^{{\ell}^\top U} \\ \\ {b^{\ell}}^\top U+V^\top   
    \end{matrix} \\  
    \hline  
    \begin{matrix}  
    0_{n_{\pi} \times (N_{\mathbf{z}}-N_\ell)} \ \ \   UW^\ell \ \ \  Ub^\ell +V  
    \end{matrix}  & -I_{n_{\pi}}  
    \end{array}  
    \end{matrix}\right].
    \end{equation}
    \addtolength{\arraycolsep}{3pt}
%
Then we have
    $\pi(z^0) \in \mathcal{E}_{\pi(z^0)}=\mathcal{E}(-U^{-1}V, U^{-2})$.
\end{theorem}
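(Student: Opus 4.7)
The plan is a standard S-procedure argument combined with a Schur complement to produce the block LMI form of $M$. By Lemma \ref{lem:INOUTNN} (part 2), it suffices to establish the output quadratic inequality $[\mathbf{z}^\top, 1] M_{out} [\mathbf{z}^\top, 1]^\top \leq 0$, since this is equivalent to $\pi(z^0) \in \mathcal{E}(-U^{-1}V, U^{-2})$. So I would reduce the theorem to proving this single inequality under the hypothesis $M(\tau,Q,U,V) \preceq 0$.

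First, I would collect the two nonnegative quadratic forms we have available. From Lemma \ref{lem:INOUTNN} (part 1), the hypothesis $\gamma_{i,k} \in \mathcal{E}(\mu_{\gamma_{i,k}},\Sigma_{\gamma_{i,k}})$ with multipliers $\tau_i \geq 0$ gives $[\mathbf{z}^\top,1]\,(\sum_i \tau_i M_i)\,[\mathbf{z}^\top,1]^\top \geq 0$. From the ReLU quadratic constraint \eqref{eq:activationQC}, for any admissible $Q \in \mathcal{Q}$ we have $[\mathbf{z}^\top,1]\,M_{mid}(Q)\,[\mathbf{z}^\top,1]^\top \geq 0$. Adding these, the S-procedure tells us that a sufficient condition to conclude $[\mathbf{z}^\top,1]M_{out}[\mathbf{z}^\top,1]^\top \leq 0$ is the matrix inequality
\begin{equation*}
M_{out} + \sum_{i=1}^{q} \tau_i M_i + M_{mid}(Q) \preceq 0.
\end{equation*}

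Next I would massage $M_{out}$ into a form amenable to Schur complement. Writing $e=[0,\dots,0,1]^\top$ of appropriate size and expanding the product defining $M_{out}$ in Lemma \ref{lem:INOUTNN}, one obtains the decomposition
\begin{equation*}
M_{out} = R^\top R - e e^\top, \qquad R = \begin{bmatrix} U W^\ell S^\ell & U b^\ell + V \end{bmatrix},
\end{equation*}
because the $(V^\top V - 1)$ entry produces exactly the $-e e^\top$ term after conjugation by the block $\begin{bmatrix} W^\ell S^\ell & b^\ell \\ 0 & 1\end{bmatrix}$. Since $S^\ell$ selects the $z^\ell$ component, the matrix $R^\top$ has the zero padding in its upper $(N_\mathbf{z}-N_\ell) \times n_\pi$ block, with $W^{\ell^\top}U$ and $b^{\ell^\top}U + V^\top$ filling the last two blocks—this matches the off-diagonal column of the $M$ in \eqref{eq:schurconstraint}.

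Substituting, the sufficient condition becomes
\begin{equation*}
\Bigl( \sum_{i=1}^{q} \tau_i M_i + M_{mid}(Q) - e e^\top \Bigr) + R^\top R \preceq 0,
\end{equation*}
and applying the standard Schur complement with the negative-definite pivot block $-I_{n_\pi}$ rewrites this exactly as $M(\tau,Q,U,V) \preceq 0$, which is our hypothesis. Thus the output QC holds for every $\mathbf{z}$ realizable by the network with inputs in the given ellipsoids, and the conclusion follows from Lemma \ref{lem:INOUTNN} (part 2). The only slightly delicate step is the Schur complement bookkeeping: verifying that the zero-padding pattern in the off-diagonal block of \eqref{eq:schurconstraint} indeed matches the action of $S^\ell$ on the concatenated vector $\mathbf{z}$. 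Once that is checked, the rest is an almost mechanical chain of equivalences.
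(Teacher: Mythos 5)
Your proposal is correct and follows essentially the same route as the paper's proof: reduce to the output quadratic constraint via Lemma \ref{lem:INOUTNN}(2), combine the nonnegative input and activation quadratic forms with the S-procedure to get the sufficient condition $\sum_i \tau_i M_i + M_{mid}(Q) + M_{out} \preceq 0$, factor $M_{out} = F^\top F - ee^\top$ (your $R$ is the paper's $F$), and linearize via the Schur complement with the $-I_{n_\pi}$ pivot. The only cosmetic difference is that you correctly cite part 2 of the lemma for the output-ellipsoid equivalence, where the paper's text mistakenly says ``first part.''
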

\begin{proof}
According to the first part of Lemma \ref{lem:INOUTNN}, in order to guarantee $\pi(z^0) \in \mathcal{E}_{\pi(z^0)}=\mathcal{E}(-U^{-1}V, U^{-2})$, we need to provide a sufficient condition to satisfy the inequality,
\begin{equation}
    \begin{bmatrix}\mathbf{z} \\ 1 \end{bmatrix}^\top  M_{out} \begin{bmatrix}\mathbf{z} \\ 1 \end{bmatrix} \leq 0.
\end{equation}
Suppose the following matrix inequality holds,
\begin{equation} \label{eq:constraint}
    \sum_{i=1}^{q} \tau_i M_{i} + M_{mid}(Q) +M_{out} \preceq 0,
\end{equation}
for some $\tau_i \geq 0$, $Q \in \mathcal{Q}$. Based on \eqref{eq:constraint} we can conclude, 
    \begin{equation}
    \sum_{i=1}^{q} \underbrace{\begin{bmatrix}\mathbf{z} \\ 1 \end{bmatrix}^\top\!\!\!  \tau _i M_{i}\begin{bmatrix}\mathbf{z} \\ 1 \end{bmatrix}}_{\text{$\geq 0$}}+ \underbrace{\begin{bmatrix}\mathbf{z} \\ 1 \end{bmatrix}^\top\!\!\!   M_{mid} \begin{bmatrix}\mathbf{z} \\ 1 \end{bmatrix}}_{\text{$\geq 0 $}}+\begin{bmatrix}\mathbf{z} \\ 1 \end{bmatrix}^\top\!\!\!  M_{out} \begin{bmatrix}\mathbf{z} \\ 1 \end{bmatrix}\ \leq 0,
    \end{equation}
which provides what we need from $M_{out}$. The matrix inequality in \eqref{eq:constraint} is not linear in $U$ and $V$. However, it can be made linear through the Schur complement. First, rewrite $M_{out}$ as
\addtolength{\arraycolsep}{-2pt}
    \begin{align}
    M_{out}&=\begin{bmatrix}W^{\ell}S^{\ell} & b^\ell\\ 0 & 1\end{bmatrix}^\top  \begin{bmatrix}U & V \end{bmatrix}^\top  \begin{bmatrix}U & V \end{bmatrix} \begin{bmatrix}W^{\ell}S^{\ell} & b^\ell\\ 0 & 1\end{bmatrix} -  \nonumber\\
    &\begin{bmatrix}W^{\ell}S^{\ell} & b^\ell\\ 0 & 1\end{bmatrix}^\top  \begin{bmatrix}0_{1 \times n_{\pi}} & 1 \end{bmatrix}^\top  \begin{bmatrix}0_{1 \times n_{\pi}} & 1 \end{bmatrix} \begin{bmatrix}W^{\ell}S^{\ell} & b^\ell \nonumber\\ 0 & 1\end{bmatrix}\\
    &:=F^\top  F-ee^\top ,
    \end{align}
\addtolength{\arraycolsep}{2pt}
where
    \begin{equation}
    \begin{aligned}
    &F=\begin{bmatrix}0_{n_{\pi \times (N_{\mathbf{z}}-N_{\ell})}} &  UW^{\ell} & Ub^{\ell}+V\end{bmatrix},\\ &e=\underbrace{[\ 0,\ 0,\ 0,\ ..,\ 1]^\top }_\text{$N_{\mathbf{z}}+1$}.
    \end{aligned}
    \end{equation}
Through the application of the Schur complement on $M_{out}$, the matrix inequality in \eqref{eq:constraint} can be written as \eqref{eq:schurconstraint}, which is linear in $\left(V, U, Q \right)$.
\end{proof}   
Using the result of Theorem \ref{thm:multiellipsoid}, the tightest bound on $\pi(z^0)$ is obtained from the following semidefinite program,
\begin{equation}\label{eq:tightestbound}
    \left\{\begin{aligned}
    \mathrm{min}_{U,V,Q} &\  -\mathrm{logdet}(U)\  \text{or}\ -\mathrm{tr}(U)  \\
    \text{s.t.}\quad & M(\tau,Q,U,V) \preceq 0, \\
    & \tau \geq 0, \ Q \in \mathcal{Q} \text{ from \cite{fazlyab2019safety} (Lemma 4)},
    \end{aligned}\right.
\end{equation}    
where the objective function can be any metric related to the ellipsoid volume.    
\subsection{Application to Prediction Bound Propagation}
We now apply the general multiple ellipsoid input robustness bound developed in Theorem \ref{thm:multiellipsoid} to find the prediction bound $\mathcal{E}(\hat{\mu}_{k+1},\hat{\Sigma}_{k+1})$ with input ellipsoids $\mathcal{E}(y_{k-N},\bar{\Sigma}_v)$, $\cdots$, $\mathcal{E}(y_{k},\bar{\Sigma}_v)$. 
\begin{proposition}
Let $\gamma_{i,k} \in \mathcal{E}(y_{k-i+1},\bar{\Sigma}_v)$ for $i=1,2,\cdots,q=N+1$ and define $\hat{\Sigma}_{k+1}=U^{-2}$, $\hat{\mu}_{k+1}=-U^{-1}V$, and $\mathcal{C}_k$ as the actual (non-ellipsoidal) prediction set of the neural network. Then $\pi(z^0) \in \mathcal{C}_k$ is bounded by $\mathcal{E}(\hat{\mu}_{k+1}, \hat{\Sigma}_{k+1})$ where the tightest bound comes from the convex optimization \eqref{eq:tightestbound}.
\end{proposition}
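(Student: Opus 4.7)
The proposition is essentially a direct specialization of Theorem \ref{thm:multiellipsoid} to the NARX architecture in \eqref{observer} with the detector's input confidence ellipsoids plugged in as the generic input ellipsoids of that theorem. My plan has three concise steps.

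First, I would match the NARX network \eqref{observer} to the general form \eqref{eq:NN} by setting $q = N+1$, $n_i = p$, and $\gamma_{i,k} = y_{k-i+1}$, so that the concatenated input $\Gamma_k$ coincides with $\mathcal{Y}_{k,N}$. Under this identification, the hypothesis $\gamma_{i,k} \in \mathcal{E}(y_{k-i+1}, \bar{\Sigma}_v)$ is exactly the ellipsoidal input assumption of Lemma \ref{lem:INOUTNN}(1) with the substitutions $\mu_{\gamma_{i,k}} = y_{k-i+1}$ and $\Sigma_{\gamma_{i,k}} = \bar{\Sigma}_v$. In particular, the input quadratic constraint \eqref{eq:InputQC} holds for every nonnegative multiplier vector $\tau \in \mathbb{R}^{N+1}_+$.

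Second, I would invoke Theorem \ref{thm:multiellipsoid}: any $(U, V, Q, \tau)$ with $\tau \geq 0$, $Q \in \mathcal{Q}$, and $M(\tau, Q, U, V) \preceq 0$ certifies $\pi(z^0) \in \mathcal{E}(-U^{-1}V, U^{-2})$. Relabeling the ellipsoid parameters as $\hat{\mu}_{k+1} = -U^{-1}V$ and $\hat{\Sigma}_{k+1} = U^{-2}$ yields $\pi(z^0) \in \mathcal{E}(\hat{\mu}_{k+1}, \hat{\Sigma}_{k+1})$ for every admissible input. Since $\mathcal{C}_k$ is by definition the image under the network of the Cartesian product of the input ellipsoids, every element of $\mathcal{C}_k$ inherits this bound and hence $\mathcal{C}_k \subseteq \mathcal{E}(\hat{\mu}_{k+1}, \hat{\Sigma}_{k+1})$.

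Third, for the tightest-bound claim, the volume of $\mathcal{E}(\hat{\mu}_{k+1}, \hat{\Sigma}_{k+1})$ is proportional to $\det(U)^{-1}$, so minimizing $-\log\det(U)$ (with $-\mathrm{tr}(U)$ as a cheaper surrogate) shrinks the certified outer approximation. Because $M(\tau, Q, U, V)$ is linear in $(U, V, Q)$ for fixed $\tau \geq 0$, the feasibility constraint is an LMI and the optimization reduces to the semidefinite program \eqref{eq:tightestbound}.

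The main obstacle is purely bookkeeping: one must verify that the selector matrix $E_i$ in \eqref{eq:selectionmatrix} correctly extracts each measurement block of dimension $p$ from the concatenated vector $\mathbf{z}$, and that Lemma \ref{lem:INOUTNN}'s requirement of invertible $\Sigma_{\gamma_{i,k}}$ is satisfied (which holds because $\bar{\Sigma}_v = \alpha \Sigma_v$ with $\alpha > 0$ and a positive-definite sensor covariance). Beyond that, there is no conceptual difficulty: the proposition is a labeled corollary of Theorem \ref{thm:multiellipsoid} applied to the detector's specific input ellipsoids.
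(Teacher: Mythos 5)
Your proposal is correct and follows essentially the same route the paper takes: the paper states this proposition without a separate proof, treating it as a direct instantiation of Theorem \ref{thm:multiellipsoid} with $q=N+1$, $\mu_{\gamma_{i,k}}=y_{k-i+1}$, and $\Sigma_{\gamma_{i,k}}=\bar{\Sigma}_v$, followed by the SDP \eqref{eq:tightestbound} for the tightest such ellipsoid, which is exactly what you spell out. Your added bookkeeping remarks (the selector matrices $E_i$ and the invertibility of $\bar{\Sigma}_v$) are sensible details the paper leaves implicit.
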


\section{Numerical Experiments}
We consider a linear beam and slider and a nonlinear water tank cascade to demonstrate our proposed detector.

\begin{figure*}[ht]
    \begin{subfigure}{.28\textwidth}
    \centering \vspace{1.6cm}
    \includegraphics[width=0.95\linewidth]{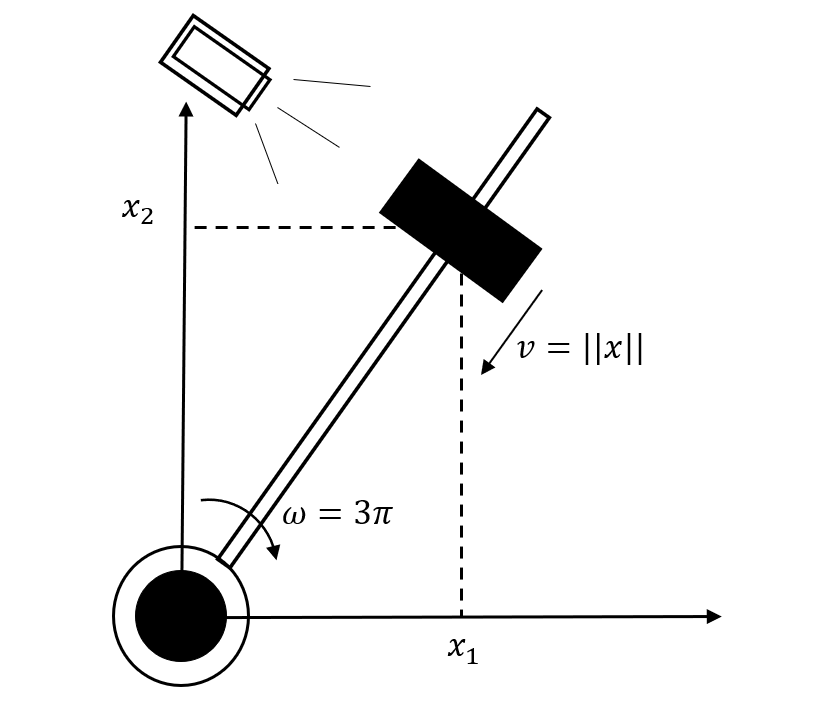}
    \caption{Beam and slider} \label{fig:beamandslider_system}
    \end{subfigure}
    \begin{subfigure}{.35\textwidth}
    \centering
    \includegraphics[width=0.95\linewidth]{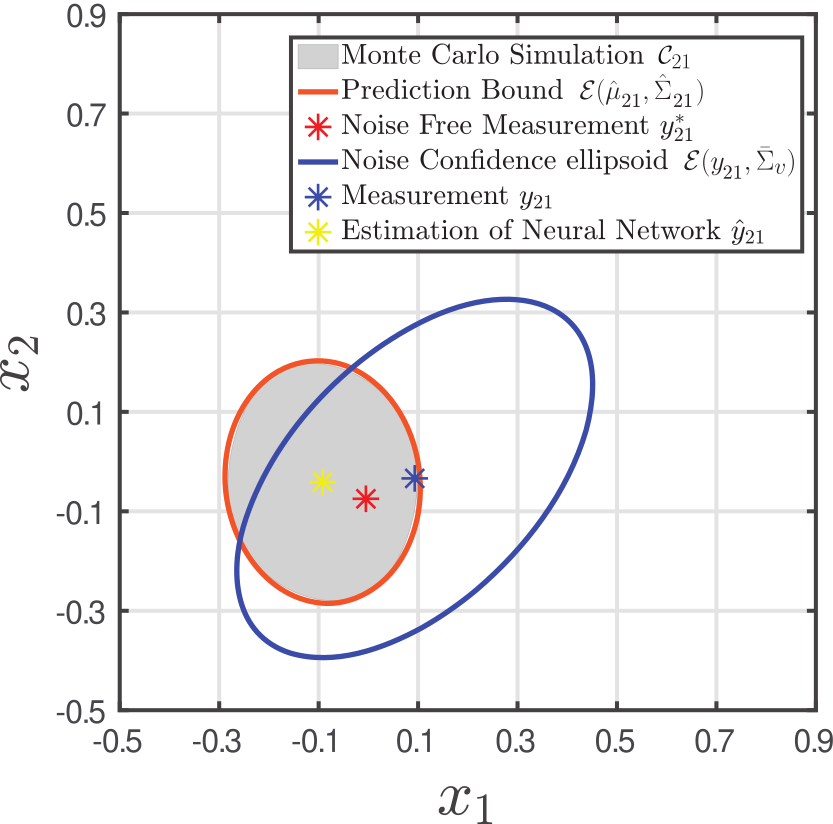}
    \caption{$k=20$ Prediction bound} \label{fig:beamandslider_prediction}
    \end{subfigure}
        \begin{subfigure}{.35\textwidth}
    \centering
    \includegraphics[width=0.95\linewidth]{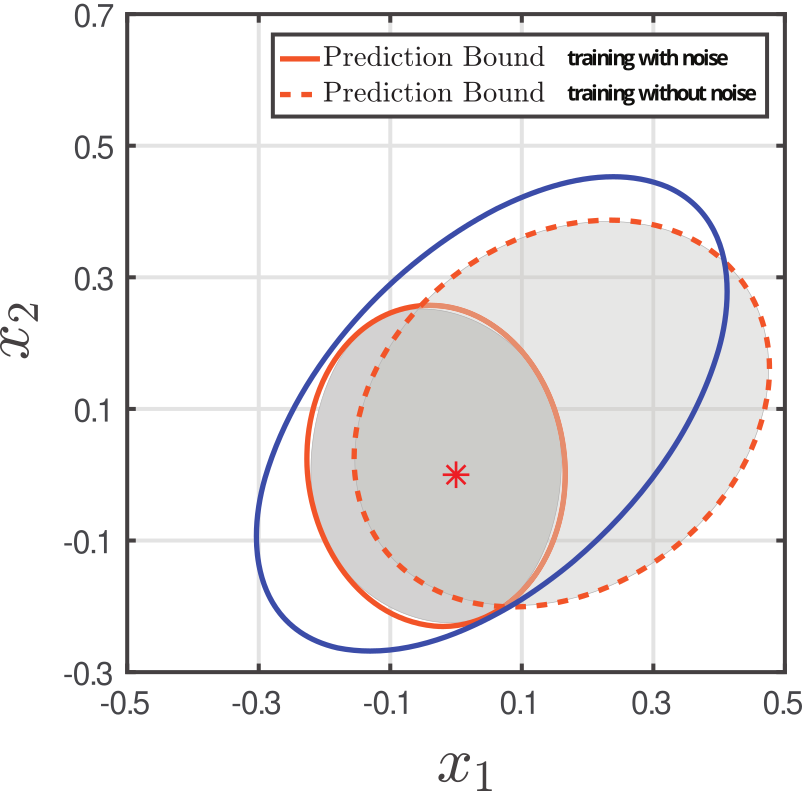}
    \caption{Robustness due to training} \label{fig:beamandslider_training}
    \end{subfigure}
    \caption{(a) The linear beam and slider system. (b) The effectiveness of the prediction bound at $k=20$. (c) A neural network trained with noisy data has better robustness characteristics than one trained with noiseless data and is quantified well by our prediction bounds.}
    \label{fig:beamandslider}
\end{figure*}

\subsection{Linear Beam and Slider}
In the beam and slider (see Fig. \ref{fig:beamandslider_system}), a sensor measures the position of the slider which slides along the beam with a rate equal to its distance from the origin. The beam rotates with constant angular velocity of $3\pi \frac{rad}{sec}$. The sensor noise is Gaussian distributed, $v_k \sim \mathcal{N}(0,\Sigma_v)$, and we truncate it with a $\bar{p}=95\%$ confidence ellipse $\mathcal{E}(y_{k},\bar{\Sigma}_v)$ with
\begin{equation*}
    \Sigma_v=\begin{bmatrix} 0.0214  &  0.0112 \\ 0.0112 &  0.0217  \end{bmatrix}\ \text{and}\  \bar{\Sigma}_v=\begin{bmatrix}0.1282 & 0.0671\\0.0671 & 0.1300\end{bmatrix}.
\end{equation*}

We train a NARX feedforward neural network with two hidden layers (10 neurons in the first layer, 2 neurons in the second) with ReLU activation functions from noisy sensor data. 
The data was generated through simulation of the beam and slider system based on the linear discrete time model,
\begin{equation}
\left\{\begin{aligned}
x_{k+1}&=0.8\begin{bmatrix*}[r]\text{cos}(\beta) & -\text{sin}(\beta) \\  \text{sin}(\beta) & \text{cos}(\beta)\end{bmatrix*}x_{k}, \quad \beta = \frac{3\pi}{5},\\
y_k&=x_k+v_k.
\end{aligned}\right.
\end{equation}
We use the current measurement and one past measurement (i.e., $N=1$) to build the NARX model (we still call it NARX because the neural network returns a nonlinear approximation of this linear model). 

Figure \ref{fig:beamandslider_prediction} demonstrates a snapshot of the efficacy of our methods under normal operation at $k=20$. The prediction bound $\mathcal{E}(\hat{x}_{21},\hat{\Sigma}_{21})$ (orange ellipse) can be quite tight on the actual prediction set $\mathcal{C}_{21}$ (gray region), which is the set of all possible neural network output estimates under all possible combinations from the input ellipsoids. As designed, the estimate $\hat{y}_{21}$ (is not necessarily the center $\hat{\mu}_{21}$) and ideal measurement $y^*_{21}$ (assuming good accuracy of the estimation) are within the prediction bound. The actual measurement $y_{21}$ is within the geometric sum of the prediction bound (orange) and confidence ellipsoid (blue); in this case it is also within the prediction bound, as the noise realization is relatively small. Under normal operation we received 0.7\% false alarms compared with the upper bound on the false alarm rate $\bar{\mathcal{A}}_s=1-\bar{p}^{3}=14.26\%$. This conservatism below the upper bound illustrates that the distribution within the prediction boundary, and the subsequent geometric sum, is an important factor for improving our ability to predict the false alarm rate and an important direction for future work.



Despite the conservatism, we demonstrate that it is an effective tool for detecting anomalies in behavior. The first fault we consider is a vibration generated on the shaft because of the rotor. This imposes a new additive periodic displacement on the shaft and consequently the slider in a fixed direction
\begin{equation}
 \delta x_k = \begin{bmatrix}\delta_1\\\delta_1\end{bmatrix} \text{sin}(k),
\end{equation}
where $\delta_1=0.3$ and $k$ is the time index. Under this fault the alarm rate raises to $27.17\%$.

The second scenario we consider is a sensor anomaly in which the sensor measurement is displaced by $\delta_2=0.3$,
\begin{equation}
y_k=x_k+v_k+\begin{bmatrix}\delta_2\\ \delta_2 \end{bmatrix}.
\end{equation}
Under this fault the alarm rate raises to $25.35\%$.

The prediction bound provides a quantification of the robustness of a neural network to perturbed input. It is intuitive that a neural network trained by noisy data should have enhanced robustness compared with a neural network trained only by ideal (no noise) sensor measurements since the model parameters are learned to filter the noise more effectively. The prediction bound allows us to characterize this intuitive relationship quantitatively. In Fig. \ref{fig:beamandslider_training}, we provide the prediction bounds for this system at exactly identical conditions, with the only change that one model is trained with the actual noisy data and the other uses the underlying ideal (no noise) outputs. The larger prediction ellipsoid bound for the neural network trained without noise (dashed ellipse) quantifies the value of training with noisy data.

\begin{figure}[t]
    \begin{subfigure}{.12\textwidth}
    \centering \vspace{0.5cm}
    \includegraphics[width=0.95\linewidth]{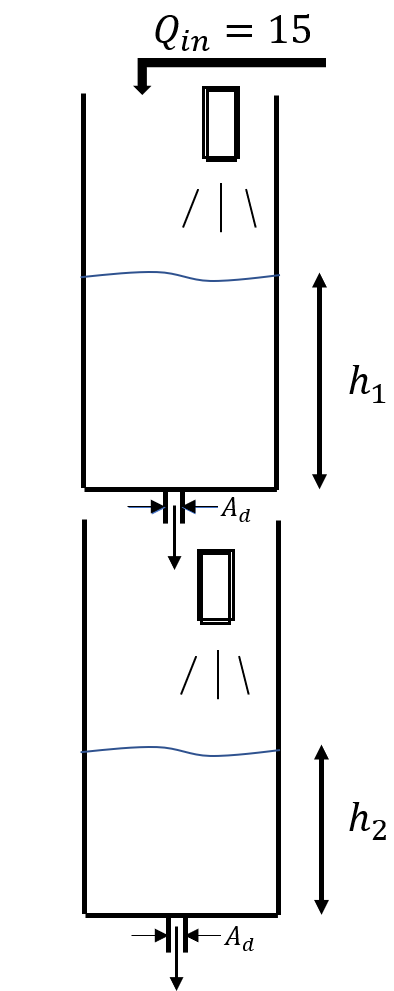}
    \vspace{0.01cm}
    \caption{Tank System} \label{fig:Twotank}
    \end{subfigure}
    \begin{subfigure}{.37\textwidth}
    \centering
    \includegraphics[width=0.95\linewidth]{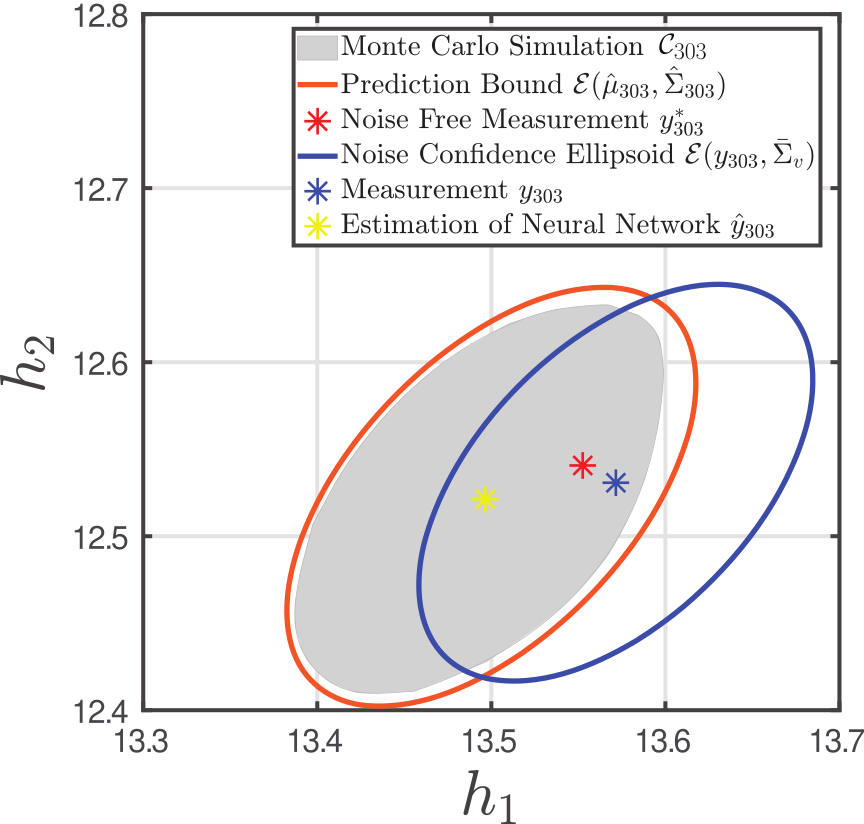}
    \caption{$k=302$ Prediction Bound} \label{fig:exampletank}
    \end{subfigure}
    \caption{(a) The nonlinear water tank system. (b) The effectiveness of the prediction bound at $k=302$.} 
    \label{fig:tankresults}
\end{figure}

\subsection{Nonlinear Water Tank Cascade}

We now consider a two tank cascade (see Fig. \ref{fig:Twotank}) in which flow out of the tanks is driven by gravity and yields a nonlinear dynamic,
\begin{equation}
\left\{\begin{aligned}
\dot{h_1}(t)&=Q_{in}-c_dA_d\sqrt{2gh_1(t)},\\
\dot{h_2}(t)&=c_dA_d\sqrt{2gh_1(t)}-c_dA_d\sqrt{2gh_2(t)},\\
y_k &= h(t_k) + v_k.
\end{aligned}\right.
\end{equation}
where $t_k=k\Delta t$ with inter-sampling period $\Delta t=0.02$ sec, $Q_{in}=15$, $c_d=0.9$, $A_d=1$ cm$^2$, and $g$ is the gravitational constant. For simplicity we use the same noise distribution and confidence level $\bar{p}$ as the prior example.

We train the neural network (two hidden layers: 20 neurons in the first layer; 5 neurons in the second layer) with the current noisy measurement and three (i.e., $N=3$) past noisy measurements generated by simulating the above equations. 

Figure \ref{fig:exampletank} provides a snapshot of the performance of our approach under normal operation, at $k=302$. Under normal operation the detector generates zero false alarms compared with the upper bound of $\bar{\mathcal{A}}_s=1-\bar{p}^{N+2}=22.62 \%$.


Here we simulate a fault in the drainage of the lower tank (e.g., debris caught in the drain) such that 20\% of the lower drain area is blocked. In this case, our proposed detector generates alarms at a rate of $58.1\%$ in steady state.

\section{Conclusion}
When first principle models of dynamical systems are too difficult to construct there is a growing tendency to turn to data-driven techniques. In this paper, we leverage recent results on the robustness of neural network predictions under input perturbations to compute bounds on the estimates produced from noisy measurements when a neural network is used to approximate an autoregressive model of the output measurements. We use this bound to define normal behavior under typical measurement noise and use this as a boundary to detect abnormal behavior. 

This detector performs well and provides tight detection for the extreme points of normal behavior. This study has revealed that despite the tightness on these extreme points, a more accurate picture of the distribution of normal behavior is needed to produce tighter alarm rates. Nonetheless, we demonstrate the proposed detector is able to identify anomalies of a variety of types in a systematic way.

\bibliographystyle{IEEEtran}
\bibliography{security}

\appendices
\section{comparison of multiple input approach with single input approach}\label{appendixA}
Consider the multiple ellipsoidal bounded inputs $\gamma_{i,k}$ introduced in Lemma \ref{lem:INOUTNN}. The quadratic constraint for this ellipsoid results to matrix $M_i$ and the sum $\sum_{i=1}^q \tau_i M_i$ can be written as (suppressing the $k$ index for readability)
\begin{equation}\label{eq:new}
    \begin{bmatrix}-\tau_1 \Sigma_{\gamma_{1}}^{-1}\!\! & 0&\cdots\!\! &\tau_1 \Sigma_{\gamma_{i}}^{-1}\mu_{\gamma_{1}}\\
    0&-\tau_2 \Sigma_{\gamma_{2}}^{-1}\!\! &\cdots\!\! &\tau_2\Sigma_{\gamma_{2}}^{-1}\mu_{\gamma_{2}}\\
    \vdots& \vdots & \ddots\!\! & \vdots\\
    \tau_1\mu_{\gamma_{1}}^\top  \Sigma_{\gamma_{1}}^{-1}\!\! &\tau_2 \mu_ {\gamma_{2}}^\top \Sigma_{\gamma_{2}}^{-1}\!\! & \dots\!\! &-\sum_{i=1}^q \tau_i\!\left(\mu_{\gamma_{i}}^\top  \Sigma_{\gamma_{i}}^{-1}\mu_{\gamma_{i}}\!\!-\!1\right)  \end{bmatrix}.
\end{equation}
On the other hand if we follow \cite{fazlyab2019probabilistic} and define an overall ellipsoidal bound on the input vector, this bound should satisfy,
\begin{equation}\label{eq:loosebound}
    \sum_{i=1}^{q} (\gamma_{i,k}-\mu_{\gamma_{i,k}})^\top  \Sigma_{\gamma_{i,k}}^{-1}(\gamma_{i,k}-\mu_{\gamma_{i,k}}) \leq q
\end{equation}
(compare to \eqref{eq:input_ellipsoidal_bound}) where the equality happens where all the points $\gamma_{i,k}$ are selected from the boundary of the input ellipsoidal bounds. Constructing $\sum_{i=1}^q \tau_i M_i$ in this case has only one term so $q=1$ and $\tau_1=\tau$ leading to
\begin{equation}\label{eq:old}
    \tau \begin{bmatrix}-\Sigma_{\gamma_{1}}^{-1} & 0&\cdots & \Sigma_{\gamma_{i}}^{-1}\mu_{\gamma_{1}}\\
    0&-\Sigma_{\gamma_{2}}^{-1} &\cdots &\Sigma_{\gamma_{2}}^{-1}\mu_{\gamma_{2}}\\
    \vdots& \vdots & \ddots & \vdots\\
    \mu_{\gamma_{1}}^\top  \Sigma_{\gamma_{1}}^{-1} & \mu_ {\gamma_{2}}^\top \Sigma_{\gamma_{2}}^{-1} & \dots &-\sum_{i=1}^q \mu_{\gamma_{i}}^\top  \Sigma_{\gamma_{i}}^{-1}\mu_{\gamma_{i}} +q \end{bmatrix}.
\end{equation}
Clearly \eqref{eq:old} is a specific form of \eqref{eq:new}, in which the single decision variable $\tau$ in \eqref{eq:old} is replaced with $q$ decision variables $\{\tau_i\}_{i=1}^q$ in \eqref{eq:new}. Hence any solution for \eqref{eq:old} can be expressed by $\eqref{eq:new}$ justifying that the multiple input ellipsoid approach can do no worse than the single input ellipsoid approach (i.e., it is as good or better than the single input approach).

\end{document}